\algrenewcommand\algorithmicindent{1em}
\newcommand{\sct}[1]{\shortcite{#1}}
\renewcommand{\Vec}[1]{\bm{#1}}
\newcommand{\Mat}[1]{\bm{#1}}
\newcommand{\inv}{{-1}}
\newcommand{\T}{\top}
\newcommand{\FN}[1]{{\lVert{#1}\rVert_\mathrm{F}}}
\newcommand{\SFN}[1]{{\lVert{#1}\rVert_\mathrm{F}^2}}
\newcommand{\dataset}[1]{\mathcal{#1}}
\newcommand{\argmin}[1]{\underset{#1}{\operatorname{argmin}}~}
\newcommand{\onehot}{\operatorname{onehot}}
\newcommand{\B}{\bf}
\newcommand{\U}{\underline}
\def\aauc{$\mathcal{A}_{\text{auc}}$}
\def\aavg{$\mathcal{A}_{\text{avg}}$}
\def\alst{$\mathcal{A}_{\text{last}}$}
\newtheorem{theorem}{Theorem}
\newcommand{\email}[1]{\href{mailto:#1}{#1}}
\title{AIR: Analytic Imbalance Rectifier for Continual Learning}
\author{
    Di Fang\textsuperscript{\rm 1},
    Yinan Zhu\textsuperscript{\rm 1},
    Runze Fang\textsuperscript{\rm 1},
    Cen Chen\textsuperscript{\rm 1},
    Ziqian Zeng\textsuperscript{\rm 2},
    Huiping Zhuang\textsuperscript{\rm 2}\thanks{Corresponding author (e-mail: \email{hpzhuang@scut.edu.cn}).}
}
\begin{document}
\maketitle


\begin{abstract}
    \par Continual learning enables AI models to learn new data sequentially without retraining in real-world scenarios. Most existing methods assume the training data are balanced, aiming to reduce the catastrophic forgetting problem that models tend to forget previously generated data. However, data imbalance and the mixture of new and old data in real-world scenarios lead the model to ignore categories with fewer training samples. To solve this problem, we propose an analytic imbalance rectifier algorithm (AIR), a novel online exemplar-free continual learning method with an analytic (i.e., closed-form) solution for data-imbalanced class-incremental learning (CIL) and generalized CIL scenarios in real-world continual learning. AIR introduces an analytic re-weighting module (ARM) that calculates a re-weighting factor for each class for the loss function to balance the contribution of each category to the overall loss and solve the problem of imbalanced training data. AIR uses the least squares technique to give a non-discriminatory optimal classifier and its iterative update method in continual learning. Experimental results on multiple datasets show that AIR significantly outperforms existing methods in long-tailed and generalized CIL scenarios. The source code is available at \url{https://github.com/fang-d/AIR}.
\end{abstract}

%

\section{Introduction}
    \par Humans can continuously learn new knowledge and expand their capabilities in real-world scenarios where data comes in a sequential data stream. Inspired by this ability, continual learning (CL) is proposed to enable AI models to learn new knowledge and capabilities without retraining and forgetting. Exploring this learning paradigm is significant for deep neural networks, especially for large pre-trained models, as it reduces the considerable cost of retraining models. Many methods have been carried out around class-incremental learning (CIL), one of the most challenging paradigms in CL for the severe catastrophic forgetting problem \cite{CF_Bower_PLM1989, CF_Ratcliff_PR1990} that models tend to forget previously learned data.

    \par Most existing CIL methods assume that the training dataset is balanced. However, in real-world scenarios, the number of samples for each category usually follows a long-tailed distribution, and the data of new and old classes can arrive mixed. Thus, CIL in real-world scenarios is roughly divided into two types: long-tail CIL \cite{LTCIL_Liu_ECCV2022} and generalized CIL \cite{BlurryM_Aljundi_NeurIPS2019}. LT-CIL in \Cref{fig:imbalanced_settings} (a) refers to the process of CIL where the number of samples for each category follows a long-tailed distribution, extending conventional CIL to the real-world imbalanced dataset. GCIL in \Cref{fig:imbalanced_settings} (b) refers to the scenario where new and old classes may appear simultaneously in the same phase during CL, and it focuses on the dynamic changes in the number of training samples for each category, represented by the Si-blurry \cite{MVP_Moon_ICCV2023} setting. Besides, methods for GCIL can be applied to all CL settings, such as task-incremental learning and domain-incremental learning.

    \begin{figure}[H]
        \centering
        \includegraphics[width=\linewidth]{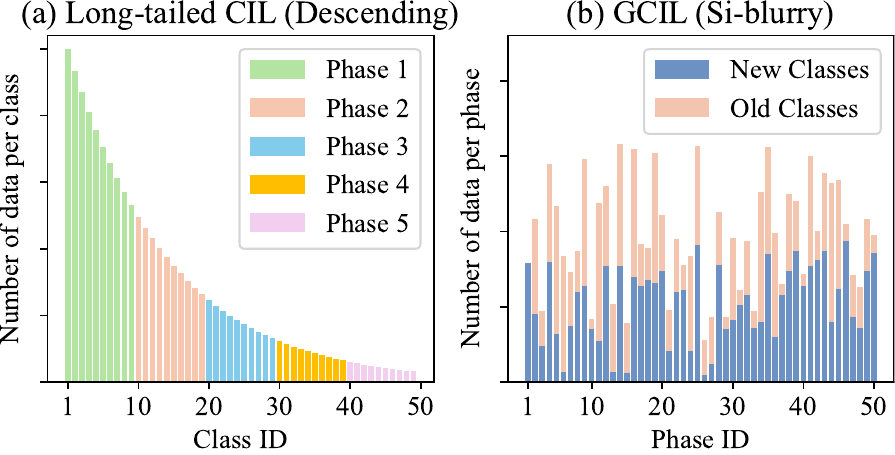}
        \caption{Different settings of imbalanced CIL.}\label{fig:imbalanced_settings}
    \end{figure}

    \par Therefore, existing CL methods face a significant performance decline under real-world scenarios where the training dataset is usually imbalanced for the following reasons. (1) The number of samples for each category in real-world datasets is imbalanced, which leads to the model ignoring categories with fewer training samples (tail class) and tending to output categories with more training samples (head class). (2) Real-world data is often generated sequentially and requires models to learn continuously online. In GCIL, the ratio of the number of samples between different categories changes dynamically, making many long-tailed learning techniques inapplicable. (3) Many applications in real-world scenarios have rigorous privacy requirements and replay-based methods that rely on storing past training samples as exemplars cannot be applied in these scenarios.

    \par Existing CL methods cannot solve the above three challenges at the same time. For example, to address the challenge (1), a common approach is to use a two-stage training method to alleviate the imbalance \cite{BiC_Wu_CVPR2019,LTCIL_Liu_ECCV2022}, but storing training samples as exemplars is required. For challenge (2), some methods introduce Transformer-based models and use techniques like P-Tuning for exemplar-free CIL \cite{L2P_Wang_CVPR2022,DualPrompt_Wang_ECCV2022,CODA-Prompt_Smith_CVPR2023}. However, the catastrophic forgetting problem is still significant in imbalanced training data. For challenge (3), state-of-the-art (SOTA) methods based on analytic CL (ACL) \cite{ACIL_Zhuang_NeurIPS2022} solve catastrophic forgetting with a frozen pre-trained model to extract features and a ridge-regression \cite{RidgeRegression_Hoerl_1970} classifier with an analytic (i.e., closed-form) solution of the classifier. Existing ACL methods treat each training sample equally and optimize the classifier with the recursive least squares (RLS) algorithm, leading to a significant performance decline under data-imbalanced scenarios.

    \par Head classes are likely to contribute more to the loss function than tail classes under imbalanced scenarios. This phenomenon emphasizes the head classes when optimizing the overall loss, resulting in discrimination and performance degradation. To address this issue, we propose the analytic imbalance rectifier (AIR), a novel online exemplar-free approach with an analytic solution for LT-CIL and GCIL scenarios in CL. AIR introduces an analytic re-weighting module (ARM) that calculates a re-weighting factor for each class for the loss function to balance the contribution of each category to the overall loss. We give an optimal unbiased classifier and its iterative update method. The key contributions of this paper are summarized as follows.

    \begin{itemize}
        \item We propose AIR, an online exemplar-free CL method for data-imbalanced scenarios with a closed-form solution.
        \item We point out that the unequal weight of each class in the loss function is the reason for discrimination and performance degradation under data-imbalanced scenarios.
        \item AIR introduces ARM that calculates a re-weighting factor for each class to balance the contribution of each class to the overall loss, giving an iterative analytic solution on imbalanced datasets.
        \item Evaluation under both the LT-CIL and GCIL scenarios shows that AIR significantly outperforms previous SOTA methods on several benchmark datasets.
    \end{itemize}

\section{Related Works}

\subsection{Conventional CIL}
    \par Conventional CIL focuses on classification scenarios where classes from different phases strictly disjoint in each incremental phase, and the data from each class are balanced or nearly balanced.

    \subsubsection{Classic CL Techniques}
        \par Many outstanding works have proposed various methods to solve the problem of catastrophic forgetting in conventional CIL. Here, we introduce two types of them that significantly impact imbalanced CIL.

        \par \textit{Exemplar replay} is first proposed by iCaRL \cite{iCaRL_Rebuffi_CVPR2017} and retains past training samples as exemplars to hint models of old classes when learning new ones. The bigger memory for exemplars, the better performance that replay-based CIL achieves. Although it is a popular anti-forgetting technique that has inspired many excellent subsequent works \cite{LUCIR_Hou_CVPR2019, PODNet_Douillard_ECCV2020, AANets_Liu_CVPR2021, FOSTER_Wang_ECCV2022, OHO_Liu_AAAI2023}, storing original training samples poses a challenge for applying these methods in scenarios where stringent data privacy is mandated.

        \par \textit{Regularization} is used to prevent the activation and the parameter drift in CL. EWC \cite{EWC_Kirkpatrick_PNAS2017}, Path Integral \cite{PathIntegral_Zenke_ICML2017}, and RWalk \cite{RWalk_Chaudhry_ECCV2018} apply weight regularization based on parameter importance evaluated by the Fisher Information Matrix. LwF \cite{LwF_Li_TPAMI2017}, LfL \cite{LfL_Jung_arXiv2016}, and DMC \cite{DMC_Zhang_WACV2020} introduce Knowledge Distillation \cite{KD_Hinton_arXiv2015} to prevent previous knowledge by distilling the activations of output, hidden layers, or both of them, respectively. Many regularization-based methods are exemplar-free but still face considerable catastrophic forgetting when there are many learning phases.

    \subsubsection{Analytic Continual Learning (ACL)}
        \par ACL is a recently emerging CL branch exhibiting competitive performance due to its equivalence between CL and joint learning. Inspired by pseudoinverse learning \cite{PIL_Guo_ANNA2001, PIL_Guo_NeuroComputing2004}, the ACL classifiers are trained with an RLS-like technique to generate a closed-form solution. ACIL \cite{ACIL_Zhuang_NeurIPS2022} restructures CL programs into a recursive learning process, while RanPAC \cite{RanPAC_McDonnell_NeurIPS2023} gives an iterative one. To enhance the classification ability, the DS-AL \cite{DS-AL_Zhuang_AAAI2024} introduces another recursive classifier to learn the residue, and the REAL \cite{REAL_He_arXiv2024} introduces the representation enhancing distillation to boost the plasticity of backbone networks. In addition, GKEAL \cite{GKEAL_Zhuang_CVPR2023} focuses on few-shot CL scenarios by leveraging a Gaussian kernel process that excels in zero-shot learning, AFL \cite{AFL_Zhuang_arXiv2024} extends the ACL to federated learning, transitioning from temporal increment to spatial increment, \citet{LSSE_Liu_ICLR2024} apply similar techniques to the reinforcement learning, and GACL \cite{GACIL_Zhuang_arXiv2024} first extends ACL into GCIL. Our AIR is the first member of this branch to address the data imbalance issue in CIL.

    \subsubsection{CIL with Large Pre-trained Models}
        \par Large pre-trained models bring backbone networks with strong feature representation ability to the CL field. On the one hand, inspired by fine-tuning techniques in NLP \cite{P-Tuning_Lester_ACL2021, LoRA_Hu_ICLR2022}, DualPrompt \cite{DualPrompt_Wang_ECCV2022}, CODA-Prompt \cite{CODA-Prompt_Smith_CVPR2023}, and MVP \cite{MVP_Moon_ICCV2023} introduce prompts into CL, while EASE \cite{EASE_Zhou_CVPR2024} introduces a distinct lightweight adapter for each new task, aiming to create task-specific subspace. On the other hand, SimpleCIL \cite{SimpleCIL_Zhou_IJCV2024} shows that with the help of a simple incremental classifier and a frozen large pre-trained model as a feature extractor that can bring generalizable and transferable feature embeddings, it can surpass many previous CL methods. Thus, it is with great potential to combine the large pre-trained models with the CL approaches with a powerful incremental classifier, such as SLDA \cite{SLDA_Hayes_CVPR2020} and the ACL methods.

\subsection{Long-Tailed CIL (LT-CIL)}
    \par To address data-imbalance problem in CIL, several approaches are proposed including LUCIR \cite{LUCIR_Hou_CVPR2019}, BiC \cite{BiC_Wu_CVPR2019}, PRS \cite{PRS_Kim_ECCV2020}, and CImbL \cite{CImbL_He_CVPR2021}. LST \cite{LST_Hu_CVPR2020} and ActiveCIL \cite{ActiveCIL_Belouadah_ECCV2020} are designed for few-shot CL and active CL, respectively. \citet{LTCIL_Liu_ECCV2022} propose a two-stage learning paradigm, bridging the existing CL methods to imbalanced CL. The experiments conducted by them on long-tailed datasets inspire a series of subsequent works \cite{DRC_Chen_ICCV2023, CLAD_Xu_AAAI2024, DGR_He_CVPR2024, ISPC_Wang_CVPR2024, DAP_Hong_IJCAI2024}.
    \par Under online scenarios, CBRS \cite{CBRS_Chrysakis_ICML2020} introduces a memory population approach for data balance, CBA \cite{CBA_Wang_ICCV2023} proposes an online bias adapter, LAS \cite{LAS_Huang_TMLR2024} introduces a logit adjust softmax to reduce inter-class imbalance, and DELTA \cite{DELTA_Raghavan_CVPR2024} introduces a decoupled learning approach to enhance learning representations and address the substantial imbalance.

\subsection{Generalized CIL (GCIL)}
    \par GCIL simulates real-world incremental learning, as data category and size distributions could be unknown in one task. The GCIL arouses problems such as intra- and inter-phase forgetting and class imbalance \cite{MVP_Moon_ICCV2023}. In the BlurryM \cite{BlurryM_Aljundi_NeurIPS2019} setting, $a\%$ of the classes disjoint between phases, with the rest appearing in each phase. The i-Blurry-N-M \cite{CLIB_Koh_ICLR2022} setting has blurry phase boundaries and requires the model to perform inference at any time. The i-Blurry scenario has a fixed number of classes in each phase with the same proportion of new and old classes. In contrast, the Si-Blurry \cite{MVP_Moon_ICCV2023} setting has an ever-changing number of classes. It can effectively simulate newly emerging or disappearing data, highlighting the problem of uneven distribution in real-world scenarios.
    \par Several approaches, such as GSS \cite{BlurryM_Aljundi_NeurIPS2019}, RM \cite{RM_Bang_CVPR2021}, CLIB \cite{CLIB_Koh_ICLR2022}, DualPrompt \cite{DualPrompt_Wang_ECCV2022}, and MVP \cite{MVP_Moon_ICCV2023}, are proposed to address this issue.

\begin{figure*}[t]
    \centering
    \includegraphics[width=\linewidth]{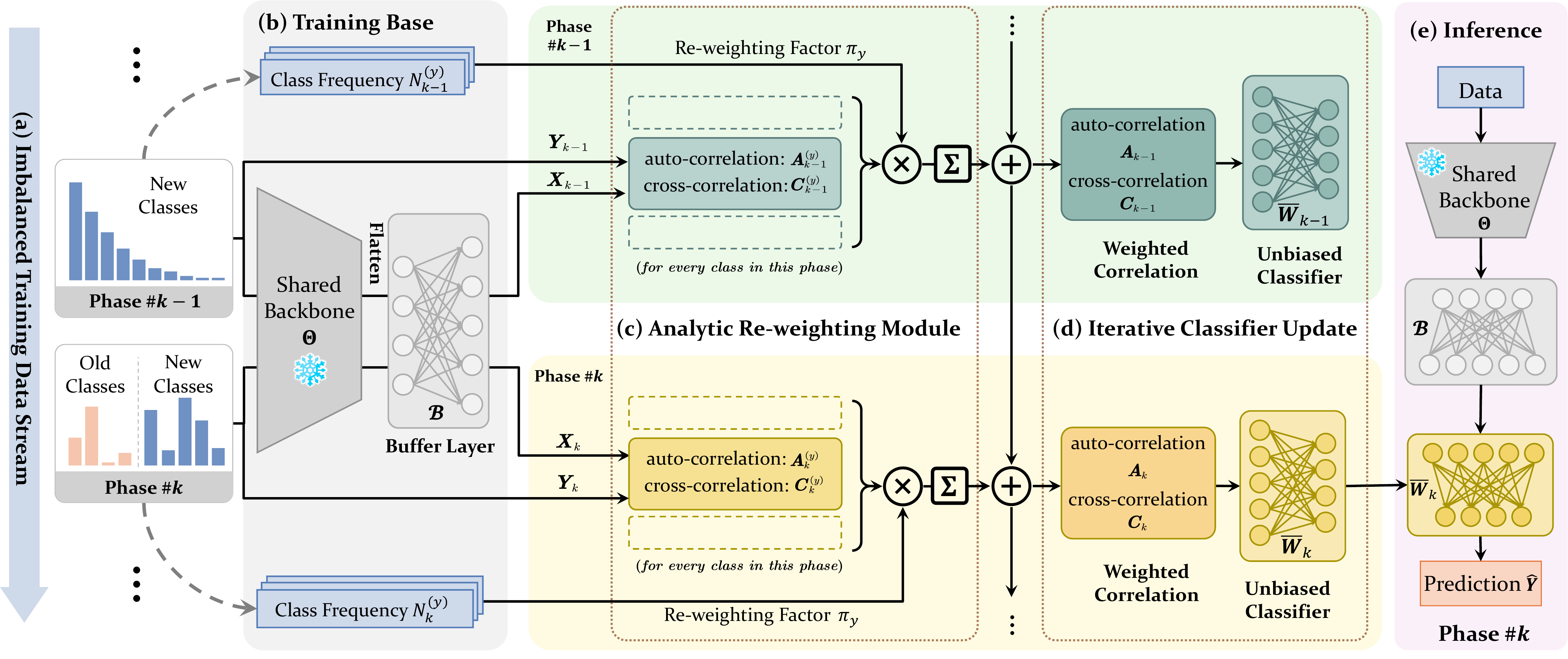}
    \caption{
        The flowchart of AIR, including
        (\textbf{a}) the input data stream that arrives phase by phase, where data is imbalanced, and the number of classes may change dynamically;
        (\textbf{b}) a frozen backbone network followed by a buffer layer that extracts features and maps into a higher dimensional space;
        (\textbf{c}) the analytic re-weighting module (ARM) calculating the re-weighting factor $\pi_{y}$ for each class $\pi_y$;
        (\textbf{d}) the unbiased classifiers that are iteratively updated at each phase;
        (\textbf{d}) the frozen backbone network, the frozen buffer layer, and the unbiased classifier are used for inference.
    }\label{fig:flowchart}
\end{figure*}

\section{Method}
    \subsection{Class-Incremental Learning Problem}
        \par Let $\{\dataset{D}_1, \dataset{D}_2, \dots, \dataset{D}_{k}, \dots\}$ be the classification dataset that arrives phase by phase sequentially to train the model. $\dataset{D}_k = \{(\Mat{\mathcal{X}}_{k,1}, y_{k,1}), (\Mat{\mathcal{X}}_{k,2}, y_{k,2}), \cdots, (\Mat{\mathcal{X}}_{k,N_k}, y_{k,N_k})\}$ of size $N_k$ is the training set at phase $k$, where $\Mat{\mathcal{X}}$ is the input tensor and $y$ is an integer representing each distinct class. $C_k$ is the maximum value of $y$ from phase $1$ to $k$, indicating the number of classes to classify at phase $k$.
        \par In conventional CIL, classes from different phases are strictly disjoint and $C_k < C_{k+1}$. However, classes from the latter phases could either appear or not appear in the previous phases and $C_k \leq C_{k+1}$ in GCIL.

    \subsection{Analytic Classifier for Balanced Dataset}
        \par AIR extracts features with a frozen backbone network followed by a frozen buffer layer. The backbone network $f_\text{backbone}(\Mat{\mathcal{X}}, \Mat{\Theta})$ of AIR is a deep neural network, where $\Mat{\Theta}$ is the network parameters either trained on the base training dataset $\dataset{D}_1$ or pre-trained on a large-scale dataset. The buffer layer $\mathcal{B}$ non-linearly projects the features to a higher dimensional space. The extracted feature vector $\Vec{x}$ is a raw vector, where
        \begin{equation}
            \Vec{x} = \mathcal{B}(f_\text{backbone}(\Mat{\mathcal{X}}, \Mat{\Theta})).
        \end{equation}
        \noindent There are several options for the buffer layer, such as a random projection matrix followed by an activation function in ACIL \cite{ACIL_Zhuang_NeurIPS2022} and RanPAC \cite{RanPAC_McDonnell_NeurIPS2023} or a Gaussian kernel in GKEAL \cite{GKEAL_Zhuang_CVPR2023}.
        \par The feature extractor and the classification model are decoupled in AIR. The classifier maps an extracted feature to a one-hot raw vector. We can use $\Mat{X}_k$ and $\Mat{Y}_k$ to represent the dataset $\dataset{D}_k$ at phase $k$ by stacking the extracted features $\Vec{x}$ and the corresponding one-hot labels $\onehot(y)$ vertically. Similarly, by stacking $\Mat{X}_k$ and $\Mat{Y}_k$ from each phase, we can get $\Mat{X}_{1:K}$ and $\Mat{Y}_{1:K}$ representing overall training data.

        \par AIR trains a ridge-regression model \cite{RidgeRegression_Hoerl_1970} with weight $\Mat{W}_{k}$ at phase $k$ as the classifier like existing ACL approaches, but uses a different loss function. However, when the training dataset is strictly balanced, the loss of AIR and existing ACL methods are the same
        \begin{equation}\label{eq:ACL_loss}
            \mathcal{L}(\Mat{W}_{k}) = \SFN{\Mat{X}_{1:K}\Mat{W}_{k} - \Mat{Y}_{1:K}} + \gamma \SFN{\Mat{W}_{k}},
        \end{equation}
        where $\FN{\cdot}$ indicates the Frobenius norm and $\gamma$ is the coefficient of the regularization term.
        \par The goal of AIR is to find the optimal weight under data-imbalanced scenarios, which is inspired by existing ACL methods that find a recursive form \cite{ACIL_Zhuang_NeurIPS2022} or an iterative form \cite{RanPAC_McDonnell_NeurIPS2023} of the optimal solution at phase $k$
        \begin{equation}\label{eq:ACL_W}
            \Mat{\hat{W}}_{k} = \argmin{\Mat{W}_{k}}\mathcal{L}(\Mat{W}_{k}) = (\sum_{t=1}^{k}\Mat{A}_{t} + \gamma \Mat{I})^{\inv}(\sum_{t=1}^{k}\Mat{C}_{t}),
        \end{equation}
        where $\Mat{A}_{t} = \Mat{X}_{t}^{\T}\Mat{X}_{t}$ is the \textit{auto-correlation feature matrix}, and $\Mat{C}_{t} = \Mat{X}_{t}^{\T}\Mat{Y}_{t}$ is the \textit{cross-correlation feature matrix}.

    \subsection{Diagnosis: Classifier Need to Be Rectified}
        \par The loss function in \Cref{eq:ACL_loss} teats each sample equally, bringing discrimination under the class imbalance scenarios.

        \par We sort the samples at each phase by their labels to illustrate this issue. Let $\Vec{x}_{k,i}^{(y)}$ be the $i$-th extracted features with label $y$ at phase $k$. Similarly, we use $\Mat{X}_{k}^{(y)}$ and $\Mat{Y}_{k}^{(y)}$ to represent the extracted features and labels with the same label $y$ at phase $k$. $\Mat{X}_{1:k}^{(y)}$ and $\Mat{Y}_{1:k}^{(y)}$ are all the features and labels with the same label $y$ from phase $1$ to $k$. $N_{k}^{(y)}$ is the number of samples at phase $k$ with label $y$, and $N_{1:k}^{(y)} = \sum_{t=1}^{k}N_{t}^{(y)}$ is the number all training samples with label $y$.

        \par Rearranging the samples by their labels, the training loss \eqref{eq:ACL_loss} can be written in
        \begin{align}\label{eq:ACL_loss_by_class}
        \mathcal{L}(\Mat{W}_{k})
            &= \sum_{t=1}^{k}\sum_{i=1}^{N_k}\SFN{\Vec{x}_{t,i}\Mat{W}_{k} - \onehot(y_{t,i})} + \gamma\SFN{\Mat{W}_{k}} \notag \\
            &= \sum_{y=0}^{C_k}\mathcal{L}^{(y)}(\Mat{W}_{k}) + \gamma\SFN{\Mat{W}_{k}},
        \end{align}
        where
        \begin{equation}
            \begin{aligned}\label{eq:class_specific_loss}
                \mathcal{L}^{(y)}(\Mat{W}_{k})
                    &= \sum_{t=1}^{k}\sum_{i=1}^{N_{t}^{(y)}}\SFN{\Vec{x}_{t,i}^{(y)}\Mat{W}_{k} - \onehot(y)} \\
                    &= \SFN{\Mat{X}_{1:k}^{(y)}\Mat{W}_{k} - \Mat{Y}_{1:k}^{(y)}}
            \end{aligned}
        \end{equation}
        is the loss on the specific class $y$. The total loss $\mathcal{L}(\Mat{W}_{k})$ is the sum of the loss on each class $\mathcal{L}^{(y)}(\Mat{W}_{k})$ plus the regularization term $\gamma\SFN{\Mat{W}_{k}}$.
        \par Each training sample contributes equally to the total loss $\mathcal{L}(\Mat{W}_{k})$ in existing ACL approaches. In class-imbalance scenarios, head classes with more training samples are more likely to have a larger contribution $\mathcal{L}^{(y)}(\Mat{W}_{k})$ to the total loss. As the goal of the classifier is to find a classifier with a minimum loss, this imbalance in the contribution to the total loss leads to a bias towards the classes with more samples, causing discrimination under the data-imbalanced scenarios. Therefore, the ridge-regression classifier needs to be rectified under the data-imbalanced scenarios.

    \subsection{Analytic Imbalance Rectifier (AIR)}
        \par A simple but effective strategy is to re-weight the loss of each class. Inspired by this idea, we introduce ARM to balance the loss of each class, adding a scalar term $\pi_y$ for each class to the overall loss function
        \begin{equation}
            \begin{split}
                \mathcal{L}_{\text{we}}(\Mat{W}_{k})
                    &= \sum_{y=0}^{C_k}\pi_y\mathcal{L}^{(y)}(\Mat{W}_{k}) + \gamma\SFN{\Mat{W}_{k}} \\
                    &= \sum_{y=0}^{C_k}\pi_y\SFN{\Mat{X}_{1:k}^{(y)}\Mat{W}_{k} - \Mat{Y}_{1:k}^{(y)}} + \gamma\SFN{\Mat{W}_{k}}.
            \end{split}
        \end{equation}
        Although the scalar term $\pi_y$ for each class can be arbitrarily configured, we just set it to the reciprocal of the number of training samples (i.e., $\pi_y = 1 / N_{t}^{(y)}$) in this paper, so that each class contributes equally to the global loss no matter how many training samples in this class.
        \par The global optimal weight of the classifier $\Mat{\bar{W}}_{k}$ can be obtained by mincing the weighted loss function $\mathcal{L}_{\text{we}}(\Mat{W}_{k})$.
        \begin{theorem}\label{thm:recursive}
            The global optimal weight of the weighted classifier at phase $k$ is
            \begin{equation}
                \begin{split}
                    \Mat{\bar{W}}_{k}
                        &= \argmin{\Mat{W}_{k}} \mathcal{L}_{\text{\rm we}}(\Mat{W}_{k}) \\
                        &= (\sum_{y=0}^{C_k}\pi_{y}\Mat{A}_{1:k}^{(y)} + \gamma\Mat{I})^{\inv}(\sum_{y=0}^{C_k}\pi_{y}\Mat{C}_{1:k}^{(y)}),
                \end{split}
            \end{equation}
            where
            \begin{equation}
                \begin{cases}
                    \Mat{A}_{1:k}^{(y)} = \sum_{t=1}^{k} \Mat{X}_{t}^{(y)\T}\Mat{X}_{t}^{(y)} = \Mat{A}_{1:k-1}^{(y)} + \Mat{X}_{k}^{(y)\T}\Mat{X}_{k}^{(y)}\\
                    \Mat{C}_{1:k}^{(y)} = \sum_{t=1}^{k} \Mat{X}_{t}^{(y)\T}\Mat{Y}_{t}^{(y)} = \Mat{C}_{1:k-1}^{(y)} + \Mat{X}_{k}^{(y)\T}\Mat{Y}_{k}^{(y)}
                \end{cases}
            \end{equation}
            can be obtained iteratively.
        \end{theorem}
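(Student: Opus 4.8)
The plan is to treat $\mathcal{L}_{\text{we}}(\Mat{W}_{k})$ as a strictly convex quadratic function of the matrix variable $\Mat{W}_{k}$ and to locate its unique stationary point. First I would note that each summand $\pi_y\SFN{\Mat{X}_{1:k}^{(y)}\Mat{W}_{k} - \Mat{Y}_{1:k}^{(y)}}$ is convex in $\Mat{W}_{k}$, being a nonnegative multiple (since $\pi_y > 0$) of the squared Frobenius norm of an affine map of $\Mat{W}_{k}$, while the regularizer $\gamma\SFN{\Mat{W}_{k}}$ is strictly convex for $\gamma > 0$. Hence $\mathcal{L}_{\text{we}}$ is strictly convex and coercive, so it admits a \emph{unique} minimizer that is completely characterized by the vanishing of its gradient.

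Next I would compute that gradient using the standard matrix-calculus identities $\nabla_{\Mat{W}}\SFN{\Mat{A}\Mat{W} - \Mat{B}} = 2\Mat{A}^{\T}(\Mat{A}\Mat{W} - \Mat{B})$ and $\nabla_{\Mat{W}}\SFN{\Mat{W}} = 2\Mat{W}$. Applying these termwise, dividing by $2$, and setting the result to zero yields the normal equation
\[
    \Bigl(\sum_{y=0}^{C_k}\pi_{y}\Mat{X}_{1:k}^{(y)\T}\Mat{X}_{1:k}^{(y)} + \gamma\Mat{I}\Bigr)\Mat{W}_{k} = \sum_{y=0}^{C_k}\pi_{y}\Mat{X}_{1:k}^{(y)\T}\Mat{Y}_{1:k}^{(y)}.
\]
The crucial identification is that, because $\Mat{X}_{1:k}^{(y)}$ is the vertical stacking of the per-phase blocks $\Mat{X}_{t}^{(y)}$, the Gram product decomposes into a phase-wise sum, $\Mat{X}_{1:k}^{(y)\T}\Mat{X}_{1:k}^{(y)} = \sum_{t=1}^{k}\Mat{X}_{t}^{(y)\T}\Mat{X}_{t}^{(y)} = \Mat{A}_{1:k}^{(y)}$, and likewise $\Mat{X}_{1:k}^{(y)\T}\Mat{Y}_{1:k}^{(y)} = \Mat{C}_{1:k}^{(y)}$. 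Substituting these definitions and inverting the coefficient matrix then recovers the claimed closed form for $\Mat{\bar{W}}_{k}$.

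To justify the inversion I would observe that each $\Mat{A}_{1:k}^{(y)}$ is a Gram matrix and hence positive semidefinite; with $\pi_y > 0$ and $\gamma > 0$ the coefficient matrix $\sum_{y}\pi_{y}\Mat{A}_{1:k}^{(y)} + \gamma\Mat{I}$ is positive definite (its smallest eigenvalue is at least $\gamma$), and therefore invertible. Finally, peeling the $t=k$ term off the defining sums immediately gives the recursions $\Mat{A}_{1:k}^{(y)} = \Mat{A}_{1:k-1}^{(y)} + \Mat{X}_{k}^{(y)\T}\Mat{X}_{k}^{(y)}$ and $\Mat{C}_{1:k}^{(y)} = \Mat{C}_{1:k-1}^{(y)} + \Mat{X}_{k}^{(y)\T}\Mat{Y}_{k}^{(y)}$, so the accumulated quantities can be maintained iteratively across phases without revisiting past data.

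Because this is a routine weighted least-squares derivation, I anticipate no deep obstacle; the only points requiring genuine care are the matrix-calculus bookkeeping---keeping each per-class weight $\pi_y$ attached to the correct Gram block as the terms are collected---and verifying that the stacking structure really does split the Gram matrices into the phase-wise sums that make the update iterative. The strict-convexity argument, though elementary, is what upgrades the unique stationary point to the global minimizer and simultaneously supplies the positive definiteness needed to express the solution in closed form.
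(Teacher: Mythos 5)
Your proposal is correct and follows essentially the same route as the paper's own proof: compute the gradient of $\mathcal{L}_{\text{we}}$ with respect to $\Mat{W}_{k}$, set it to zero, and solve the resulting normal equation to obtain $\Mat{\bar{W}}_{k} = (\sum_{y=0}^{C_k}\pi_{y}\Mat{A}_{1:k}^{(y)} + \gamma\Mat{I})^{\inv}(\sum_{y=0}^{C_k}\pi_{y}\Mat{C}_{1:k}^{(y)})$. The additional details you supply---strict convexity and coercivity to certify that the stationary point is the global minimizer, positive definiteness of the regularized coefficient matrix to justify the inversion, and the explicit phase-wise splitting of the stacked Gram matrices that yields the recursions---are steps the paper takes for granted, so they strengthen rather than diverge from its argument.
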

        \begin{proof}
            To minimize the loss function, we first calculate the gradient of the loss function $\mathcal{L}_{\text{we}}$, with respect to the weight:
            \begin{equation}
                \begin{aligned}
                    &\frac{\partial}{\partial\Mat{W}_{k}}\left(\sum_{y=0}^{C_k}\pi_y\SFN{\Mat{X}_{1:k}^{(y)}\Mat{W}_{k} - \Mat{Y}_{1:k}^{(y)}} + \gamma\SFN{\Mat{W}_{k}}\right) \\
                    &= -2\sum_{y=0}^{C_k}\pi_y\Mat{X}_{1:k}^{(y)\T}(\Mat{Y}_{1:k}^{(y)} - \Mat{X}_{1:k}^{(y)}\Mat{W}_{k}) + 2\gamma\Mat{W}_{k}
                \end{aligned}
            \end{equation}
            Setting the gradient to zero matrix yields the optimal weight:
            \begin{equation}
                \begin{aligned}
                    \Mat{\bar{W}}_{k} &= (\sum_{y=0}^{C_k}\pi_y\Mat{X}_{1:k}^{(y)\T}\Mat{X}_{1:k}^{(y)} + \gamma\Mat{I})^{-1}\sum_{y=0}^{C_k}(\pi_y\Mat{X}_{1:k}^{(y)\T}\Mat{Y}_{1:k}^{(y)}) \\
                    &= (\sum_{y=0}^{C_k}\pi_{y}\Mat{A}_{1:k}^{(y)} + \gamma\Mat{I})^{\inv}(\sum_{y=0}^{C_k}\pi_{y}\Mat{C}_{1:k}^{(y)}),
                \end{aligned}
            \end{equation}
            which completes the proof.
        \end{proof}
        \noindent Therefore, we give the pseudo-code of AIR in \Cref{algo:AIR}.

        \begin{algorithm}
            \caption{The training process of AIR for CIL.}\label{algo:AIR}
            \begin{algorithmic}
                \Procedure{TrainForOnePhase}{$\dataset{D}_k$, $\gamma$, $\Mat{\Theta}$}
                    \LComment{Extract features.}
                    \ForAll {$(\Mat{\mathcal{X}}, y) \in \dataset{D}_k$}
                        \State $\Vec{x} \gets \mathcal{B}(f_\text{backbone}(\Mat{\mathcal{X}}, \Mat{\Theta}))$
                        \State $\Mat{A}_{k}^{(y)} \gets \Mat{A}_{k}^{(y)} + \Vec{x}^{\T}\Vec{x}$
                        \State $\Mat{C}_{k}^{(y)} \gets \Mat{C}_{k}^{(y)} + \Vec{x}^{\T}\onehot(y)$
                        \State $N^{(y)} \gets N^{(y)} + 1$
                    \EndFor
                    \State
                    \LComment{Calculate the unbiased classifier.}
                    \ForAll {$y \in \dataset{D}_k$}
                        \State $\pi_{y} \gets 1 / N^{(y)}$
                        \State $\Mat{A}_{k} \gets \Mat{A}_{k} + \pi_y\Mat{A}_{k}^{(y)}$
                        \State $\Mat{C}_{k} \gets \Mat{C}_{k} + \pi_y\Mat{C}_{k}^{(y)}$
                    \EndFor
                    \State
                    \LComment{Accumulate $\Mat{A}_{k}$ and $\Mat{C}_{k}$ to reduce memory.}
                    \State $\Mat{A}_{1:k} \gets \Mat{A}_{1:k-1} + \Mat{A}_{k}$
                    \State $\Mat{C}_{1:k} \gets \Mat{C}_{1:k-1} + \Mat{C}_{k}$
                    \State
                    \State \Return $\Mat{\bar{W}}_{k} \gets (\Mat{A}_{1:k} + \gamma\Mat{I})^{-1}\Mat{C}_{1:k}$
                \EndProcedure
            \end{algorithmic}
        \end{algorithm}

    \subsection{Generalized AIR}
        \par The programming trick in \Cref{algo:AIR} that accumulates the sums of \textit{auto-correlation feature matrix} and the \textit{cross-correlation feature matrix} in $\Mat{A}_{1:k}$ and $\Mat{C}_{1:k}$ to reduce the memory is based on the assumption that classes from different phases are strictly disjoint in conventional CIL. In CIL
        \begin{equation}\label{eq:ACL_tricks}
            \Mat{A}_{1:k} = \sum_{t=1}^{K}\sum_{y=0}^{C_t}\Mat{A}_{t}^{(y)} = \sum_{y=0}^{C_k}\sum_{t=1}^{K}\Mat{A}_{t}^{(y)}
        \end{equation}
        as $\Mat{A}_{t}^{y} = \Mat{0}$ when $t \le C_{t-1}$. The memory consumption of this algorithm is $\Theta(f^2 + fC_k)$, where $f$ is the length of the feature vector $\Vec{x}$.
        \par However, classes training samples in each phase may either appear or not appear in the previous phases in GCIL scenarios, so that \cref{eq:ACL_tricks} is no longer available in GCIL scenarios. To solve this problem, all we need to do is store $\Mat{A}^{(y)} = \sum_{t=1}^{K}\Mat{A}_{t}^{(y)}$ for each class. The memory consumption of the algorithm for GCIL is $\Theta(C_k(f^2 + fC_k))$, which could be a limitation of our algorithm when the feature size $f$ and the number of classes $C_k$ are both large.
        \par The pseudo-code of generalized AIR for GCIL is listed in \Cref{algo:GAIR}.
        \begin{algorithm}[ht]
            \caption{The training process of AIR for GCIL.}\label{algo:GAIR}
            \begin{algorithmic}
                \Procedure{TrainForOnePhase}{$\dataset{D}_k$, $\gamma$, $\Mat{\Theta}$}
                    \ForAll {$(\Mat{\mathcal{X}}, y) \in \dataset{D}_k$}
                        \State $\Vec{x} \gets \mathcal{B}(f_\text{backbone}(\Mat{\mathcal{X}}, \Mat{\Theta}))$
                        \State $\Mat{A}^{(y)} \gets \Mat{A}^{(y)} + \Vec{x}^{\T}\Vec{x}$
                        \State $\Mat{C}^{(y)} \gets \Mat{C}^{(y)} + \Vec{x}^{\T}\onehot(y)$
                        \State $N^{(y)} \gets N^{(y)} + 1$
                        \State $\pi_{y} \gets 1 / N^{(y)}$
                        \State $C_y \gets \max(C_y, y)$
                    \EndFor

                    \State
                    \State $\Mat{A}_{1:k} \gets \sum_{y=0}^{C_y}\pi_y\Mat{A}^{(y)}$
                    \State $\Mat{C}_{1:k} \gets \sum_{y=0}^{C_y}\pi_y\Mat{A}^{(y)}$
                    \State
                    \State \Return $\Mat{\bar{W}}_{k} \gets (\Mat{A}_{1:k} + \gamma\Mat{I})^{-1}\Mat{C}_{1:k}$
                \EndProcedure
            \end{algorithmic}
        \end{algorithm}


\begin{table*}[!t]
    \centering
    \small
    \def\shuf{Shuffled}
    \def\asen{Ascending}
    \def\dese{Descending}
    \def\acil{ACIL / RanPAC \sct{ACIL_Zhuang_NeurIPS2022, RanPAC_McDonnell_NeurIPS2023}}
    \def\tp{\tiny$\pm$}
    \newcommand{\E}[2]{{\textbf{#1}\newline\tiny{$\pm$#2}}}
    \newcommand{\e}[2]{{#1\newline\tiny{$\pm$#2}}}
    \begin{tblr}{
        colspec={Q[l, m]Q[c, m]*{12}{X[c, m]}}, colsep=4pt, font=\small, stretch=0.8,
        cell{2}{odd}={r=1,c=2}{c,m}, cell{1}{2}={r=3,c=1}{c,m}, row{12}={gray!10}
    }
        \toprule
        \SetCell[r=3,c=1]{c,m} Method               & Memory & \SetCell[r=1,c=6]{c,m} CIFAR-100 (LT)   &&&&& & \SetCell[r=1,c=6]{c,m} ImageNet-R (LT)&&&&&   \\ \cmidrule[lr]{3-8}\cmidrule[l]{9-14}
                                                    &        & \asen &       & \dese &       & \shuf &       & \asen &       & \dese &       & \shuf &       \\
        \cmidrule[lr]{3-4}\cmidrule[lr]{5-6}\cmidrule[lr]{7-8}\cmidrule[lr]{9-10}\cmidrule[lr]{11-12}\cmidrule[l]{13-14}
                                                    &        &  \aavg  &  \alst  &  \aavg  &  \alst  &  \aavg  &  \alst  &  \aavg  &  \alst  &  \aavg  &  \alst  &  \aavg  &  \alst \\ \midrule
        Fine-tuning                                 &   0    &  65.83  &  22.02  &  19.52  &  25.58  &  43.30  &  33.56  &  40.60  &   7.68  &  18.22  &  21.15  &  21.37  &  22.62   \\
        iCaRL       \sct{iCaRL_Rebuffi_CVPR2017}    & 20/cls &  53.00  &  28.73  &  41.70  &  26.88  &  48.62  &  31.02  &  48.41  &  29.55  &  24.40  &  29.17  &  40.21  &  23.02   \\
        \acil                                       &   0    &  72.51  &  57.40  &  81.66  &  57.40  &  71.72  &  57.40  &  42.97  &  42.55  &  60.19  &  42.55  &  50.07  &  42.55   \\
        L2P         \sct{L2P_Wang_CVPR2022}         &   0    &  66.51  &  50.26  &  53.50  &  48.73  &  51.43  &  49.43  &  50.05  &  31.72  &  27.24  &  29.42  &  30.19  &  26.21   \\
        Dual-Pormpt \sct{DualPrompt_Wang_ECCV2022}  &   0    &  70.51  &  51.79  &  54.50  &  45.72  &  49.49  &  48.82  &  51.47  &  31.12  &  25.03  &  25.42  &  34.68  &  27.38   \\
        CODA-Prompt \sct{CODA-Prompt_Smith_CVPR2023}&   0    &  81.91  &  58.98  &  54.54  &  41.84  &  60.90  &  42.56  &  52.39  &  35.21  &  28.21  &  32.62  &  40.02  &  34.78   \\
        DS-AL       \sct{DS-AL_Zhuang_AAAI2024}     &   0    &  72.08  &  56.59  &\U{85.17}&\U{64.15}&\U{72.63}&\U{59.02}&  42.84  &\U{42.23}&\U{63.07}&\U{48.32}&\U{50.88}&\U{44.06} \\
        DAP         \sct{DAP_Hong_IJCAI2024}        &   0    &\U{79.09}&\U{61.49}&  56.30  &  55.47  &  61.43  &  56.12  &\B58.47  &  40.25  &  31.42  &  36.47  &  43.22  &  36.38   \\ \midrule
        AIR                                         &   0    &  \E{82.39}{0.03}  &  \E{79.70}{0.06}  &  \E{89.43}{0.02}  &  \E{79.70}{0.06}  &  \E{85.75}{0.92}  &  \E{79.70}{0.06}
                                                             &\e{\U{49.01}}{0.11}&  \E{55.49}{0.06}  &  \E{68.95}{0.05}  &  \E{55.49}{0.06}  &  \E{61.53}{2.11}  &  \E{55.49}{0.06}  \\ \bottomrule

    \end{tblr}
    \caption{Accuracy (\%) among AIR and other methods under the LT-CIL setting. Data \textbf{in bold} and \underline{underlined} represent the \textbf{best} and the \underline{second-best} results, respectively. We run experiments 7 times and show the results of AIR in ``mean$\pm$standard error''.}\label{tab:LT-CIL}
\end{table*}

\section{Experiments}
    \subsection{Scenario 1: Long-Tailed CIL (LT-CIL)}\label{sec:Exp_LT-CIL}
        \par We compare our AIR on CIFAR-100 \cite{CIFAR_Krizhevsky_2009} and ImageNet-R \cite{ImageNet-R_Hendrycks_ICCV2021} under the LT-CIL scenario with baseline and SOTA methods.

        \subsubsection{Setting}
            \par We follow \citet{DAP_Hong_IJCAI2024} to use the CIFAR-100 and the ImageNet-R datasets by splitting them into the long-tailed distribution. The imbalance ratio $\rho$, the ratio between the least and the most frequent class, is configured to $1/500$ for CIFAR-100 and $1/120$ for ImageNet-R\footnote{\citet{DAP_Hong_IJCAI2024} have not reported their imbalance ratio for ImageNet-R yet. Thus, we use the most challenging value so that the number of tail classes is 1 for the correctness of conclusions.}. The training/testing split is $80\%/20\%$ for ImageNet-R.

            \par We follow \citet{DAP_Hong_IJCAI2024} to split the dataset into 10 incremental phases. The number of classes in each phase is $10$ for CIFAR-100 and $20$ for ImageNet-R. The class distribution in each phase is divided into 3 settings: ascending, descending, and shuffled. In the ascending scenario, the learning process starts with data-scarce phases followed by data-rich ones. In contrast, in the descending scenario, the learning process begins with data-rich tasks followed by data-scarce ones. In the shuffled scenario, the classes are randomly shuffled in each phase.

        \subsubsection{Evaluation Metrics}
            \par We use the average accuracy \aavg{} and the last-phase accuracy \alst{} as the evaluation metrics. \alst{} is the average accuracy of each class in the last phase, while \aavg{} is the average accuracy of each phase.

        \subsubsection{Implementation Details}
            \par We follow \citet{DAP_Hong_IJCAI2024} to use ViT-B/16 \cite{ViT_Dosovitskiy_ICLR2021} pre-trained on ImageNet as the shared backbone. For all the ACL methods, we follow ACIL \cite{ACIL_Zhuang_NeurIPS2022} and RanPAC \cite{RanPAC_McDonnell_NeurIPS2023} to use the random buffer layer with a ReLU activation, projecting the extracted features to 2048. The coefficient of the regularization term of classifier $\gamma$ of our methods is set to $1000$. The batch size is configured to 64.

        \subsubsection{Result Analysis}
            \par As shown in \Cref{tab:LT-CIL}, AIR significantly outperforms other methods in most metrics on both CIFAR-100 and ImageNet-R datasets under the LT-CIL scenario.

            \par Gradient-based methods such as DAP usually achieve higher performance in average accuracy for better adaptation in imbalanced datasets. In contrast, ACL methods such as DS-AL reach higher last-phase accuracy for their non-forgetting property. AIR inherits the non-forgetting property of ACL and solves the data imbalance problem at the same time, thus achieving competitive average accuracy and outperforming the \alst{} of the SOTA method by over 7\%.

            \par Besides, the last-phase accuracy \alst{} of AIR are the same (i.e., 79.70\% for CIFAR-100 and 55.49\% for ImageNet-R) no matter the classes are in ascending, descending, or shuffled order, which indicates that AIR is robust to the data order in the LT-CIL scenario, keeping the same \textit{weight-invariant property} as the other ACL approaches. For comparison, the last-phase accuracy of the gradient-based approaches is significantly affected by the order of the classes.

    \subsection{Scenario 2: Generalized CIL (GCIL)}
        \par We compare our AIR on CIFAR-100 \cite{CIFAR_Krizhevsky_2009}, ImageNet-R \cite{ImageNet-R_Hendrycks_ICCV2021}, and Tiny-ImageNet \cite{ImageNet_Deng_CVPR2009} under the Si-blurry \cite{MVP_Moon_ICCV2023} scenario, one of the most challenging scenarios of GCIL with baseline and SOTA methods.
\begin{table*}[!t]
    \newcommand{\vpm}[2]{#1\textsubscript{$\pm$#2}}
    \newcommand{\VPM}[2]{\textbf{#1}\textsubscript{$\pm$#2}}
    \newcommand{\Vpm}[2]{\underline{#1\textsubscript{$\pm$#2}}}
    \centering
    \small
    \begin{tblr}{
        colspec={Q[l, m]Q[c, m]*{9}{X[c, m]}}, colsep=3pt, font=\small, stretch=0.8,
        cell{1}{2}={r=2,c=1}{c,m}, cell{1}{3}={r=1,c=3}{c,m}, cell{1}{6}={r=1,c=3}{c,m}, cell{1}{9}={r=1,c=3}{c,m}
    }
        \toprule
        \SetCell[r=2,c=1]{c,m} Method             & Memory &     CIFAR-100   &                 &                  &    ImageNet-R    &                 &                  &  Tiny-ImageNet   & & \\ \cmidrule[lr]{3-5}\cmidrule[lr]{6-8}\cmidrule[l]{9-11}
                                                  &        &\aauc            &\aavg            &\alst             & \aauc            &\aavg            &\alst             & \aauc            &\aavg            &\alst             \\ \midrule
        EWC++       \sct{EWC_Kirkpatrick_PNAS2017}&  2000  &\vpm{53.31}{1.70}&\vpm{50.95}{1.50}&\vpm{52.55}{0.71} & \vpm{36.31}{0.72}&\vpm{39.87}{1.35}&\vpm{29.52}{0.43} & \vpm{52.43}{0.52}&\vpm{54.61}{1.54}&\vpm{37.67}{0.77} \\
        ER          \sct{ER_Rolnick_NeurIPS2019}  &  2000  &\vpm{56.17}{1.84}&\vpm{53.80}{1.46}&\vpm{55.60}{0.69} & \vpm{39.31}{0.70}&\vpm{43.03}{1.19}&\vpm{32.09}{0.44} & \vpm{55.69}{0.47}&\vpm{57.87}{1.42}&\vpm{41.10}{0.57} \\
        RM          \sct{RM_Bang_CVPR2021}        &  2000  &\vpm{53.22}{1.82}&\vpm{52.99}{1.69}&\vpm{55.25}{0.61} & \vpm{32.34}{1.88}&\vpm{36.46}{2.23}&\vpm{25.26}{1.08} & \vpm{49.28}{0.43}&\vpm{57.74}{1.57}&\vpm{41.79}{0.34} \\
        MVP-R       \sct{MVP_Moon_ICCV2023}       &  2000  &\Vpm{63.09}{2.01}&\vpm{60.63}{2.20}&\Vpm{65.77}{0.65} & \VPM{47.96}{0.78}&\VPM{51.75}{0.93}&\vpm{41.40}{0.71} & \vpm{62.85}{0.47}&\vpm{64.95}{0.70}&\vpm{50.72}{0.31} \\ \midrule
        EWC++       \sct{EWC_Kirkpatrick_PNAS2017}&   500  &\vpm{48.31}{1.81}&\vpm{44.56}{0.96}&\vpm{40.52}{0.83} & \vpm{32.81}{0.76}&\vpm{35.54}{1.69}&\vpm{23.43}{0.61} & \vpm{45.30}{0.61}&\vpm{46.34}{2.05}&\vpm{27.05}{1.35} \\
        ER          \sct{ER_Rolnick_NeurIPS2019}  &   500  &\vpm{51.59}{1.94}&\vpm{48.03}{0.80}&\vpm{44.09}{0.80} & \vpm{35.96}{0.72}&\vpm{39.01}{1.54}&\vpm{26.14}{0.44} & \vpm{48.95}{0.58}&\vpm{50.44}{1.71}&\vpm{29.97}{0.75} \\
        RM          \sct{RM_Bang_CVPR2021}        &   500  &\vpm{41.07}{1.30}&\vpm{38.10}{0.59}&\vpm{32.66}{0.34} & \vpm{22.45}{0.62}&\vpm{22.08}{1.78}& \vpm{9.61}{0.13} & \vpm{36.66}{0.40}&\vpm{38.83}{2.33}&\vpm{18.23}{0.22} \\
        MVP-R       \sct{MVP_Moon_ICCV2023}       &   500  &\vpm{59.25}{2.19}&\vpm{56.03}{1.89}&\vpm{56.79}{0.54} & \vpm{44.33}{0.80}&\vpm{47.25}{1.05}&\vpm{35.92}{0.94} & \vpm{56.78}{0.60}&\vpm{58.34}{1.39}&\vpm{40.49}{0.71} \\ \midrule
        LwF         \sct{LwF_Li_TPAMI2017}        &     0  &\vpm{40.71}{2.13}&\vpm{38.49}{0.56}&\vpm{27.03}{2.92} & \vpm{29.41}{0.83}&\vpm{31.95}{1.86}&\vpm{19.67}{1.27} & \vpm{39.88}{0.90}&\vpm{41.35}{2.59}&\vpm{24.93}{2.01} \\
        SLDA        \sct{SLDA_Hayes_CVPR2020}     &     0  &\vpm{53.00}{3.85}&\vpm{50.09}{2.77}&\vpm{61.79}{3.81} & \vpm{33.11}{3.17}&\vpm{33.78}{1.76}&\vpm{39.02}{1.30} & \vpm{49.17}{4.41}&\vpm{47.93}{4.43}&\vpm{53.13}{2.29} \\
        Dual-Prompt \sct{DualPrompt_Wang_ECCV2022}&     0  &\vpm{41.34}{2.59}&\vpm{38.59}{0.68}&\vpm{22.74}{3.40} & \vpm{30.44}{0.88}&\vpm{32.54}{1.84}&\vpm{16.07}{3.20} & \vpm{39.16}{1.13}&\vpm{39.81}{3.03}&\vpm{20.42}{3.37} \\
        L2P         \sct{L2P_Wang_CVPR2022}       &     0  &\vpm{42.68}{2.70}&\vpm{39.89}{0.45}&\vpm{28.59}{3.34} & \vpm{30.21}{0.91}&\vpm{32.21}{1.73}&\vpm{18.01}{3.07} & \vpm{41.67}{1.17}&\vpm{42.53}{2.52}&\vpm{24.78}{2.31} \\
        MVP         \sct{MVP_Moon_ICCV2023}       &     0  &\vpm{48.95}{2.62}&\vpm{48.95}{1.11}&\vpm{36.97}{3.06} & \vpm{36.64}{0.91}&\vpm{38.09}{1.39}&\vpm{25.03}{2.38} & \vpm{46.80}{0.96}&\vpm{47.83}{1.85}&\vpm{29.31}{1.91} \\
        GACL        \sct{GACIL_Zhuang_arXiv2024}  &     0  &\vpm{60.36}{1.34}&\Vpm{61.50}{2.05}&\VPM{72.33}{0.07} & \vpm{41.68}{0.78}&\vpm{47.30}{0.84}&\Vpm{42.22}{0.10} & \Vpm{63.23}{1.74}&\Vpm{68.17}{2.57}&\Vpm{64.17}{0.07} \\ \midrule
        \SetRow{gray!10} AIR                      &     0  &\VPM{67.86}{1.16}&\VPM{68.82}{1.53}&\VPM{72.33}{0.07} & \Vpm{45.49}{0.93}&\Vpm{48.85}{1.49}&\VPM{42.88}{0.18} & \VPM{67.87}{1.21}&\VPM{70.34}{1.76}&\VPM{64.26}{0.09} \\ \bottomrule
    \end{tblr}
    \caption{Accuracy (\%) among AIR and other methods under the Si-Blurry setting. Data \textbf{in bold} and \underline{underlined} represent the \textbf{best} and the \underline{second-best} results, respectively. We run all experiments 5 times and show the results in ``mean$\pm$standard error''.}
    \label{tab:Results_SiBlurry}
\end{table*}

        \subsubsection{Setting}
            \par We follow \citet{MVP_Moon_ICCV2023} to use the Si-blurry scenario to test our proposed method. In the Si-blurry scenario, classes are partitioned into two groups: disjoint classes that cannot overlap between tasks and blurry classes that might reappear. The ratio of partition is controlled by the \textit{disjoint class ratio} $\bm{r}_{\text{D}}$, which is defined as the ratio of the number of disjoint classes to the number of all classes. Each blurry task further conducts the blurry sample division by randomly extracting part of samples to assign to other blurry tasks based on \textit{blurry sample ratio} $\bm{r}_{\text{B}}$, which is defined as the ratio of the extracted sample within samples in all blurry tasks. In this experiment, we set $\bm{r}_{\text{D}} = 0.1$ and $\bm{r}_{\text{B}} = 0.5$.

        \subsubsection{Evaluation Metrics}
            \par We use the average accuracy \aavg{} and the last-phase accuracy \alst{} as the evaluation metrics, which are the same as the first experiment. Besides, we follow \citet{MVP_Moon_ICCV2023} to validate the performance per 1000 samples and use the area under the curve (AUC) as the evaluation metric \aauc{}.

        \subsubsection{Implementation Details}
            \par We use DeiT-S \cite{DeiT_Touvron_ICML2021} pre-trained on 611 ImageNet classes after excluding 389 classes that overlap with CIFAR-100 and Tiny-ImageNet to prevent data leakage. The memory sizes of compared replay-based methods are set to 500 and 2000. For the ACL methods, we set the output of the buffer layer to 5000 and the coefficient of the regularization term $\gamma$ by grid search. The best $\gamma$ to AIR is 1000 on CIFAR-100 and ImageNet-R.

        \subsubsection{Result Analysis}
            \par We can see from \Cref{tab:Results_SiBlurry} that AIR outperforms all exemplar-free methods in all metrics on CIFAR-100, ImageNet-R, and Tiny-ImageNet datasets under the Si-blurry setting. The results are competitive, even compared with the replay-based methods.

            \par Our AIR outperforms replay-based methods when the memory is limited (e.g., 500) and reaches a competitive result when the memory is 2000. Although replay-based methods can be further improved using more exemplars, they could bring more training memory and costs.

            \par Compared with GACL, AIR shows a significant improvement in \aauc{} and \aavg, indicating that the proposed method is more effective in the data-imbalanced scenario. However, for the balanced dataset in total (e.g., CIFAR-100), the last-phase accuracy of AIR and GACL is closed, showing that the GACL is just a particular case of AIR.

\subsection{AIR Solves the Imbalance Issue}
    \subsubsection{Classification}
        \par Compared with ACIL, AIR has a more balanced classification result, indicating that our method gives a more balanced prediction for each class. As shown in the confusion matrix in \Cref{fig:confusion_matrix} (a), ACIL is more likely to predict the classes with more samples, resulting in worse performance for the tail classes. In contrast, the AIR gives a more balanced prediction for each class in \Cref{fig:confusion_matrix} (b).
        \begin{figure}[ht]
            \centering
            \includegraphics[width=\linewidth]{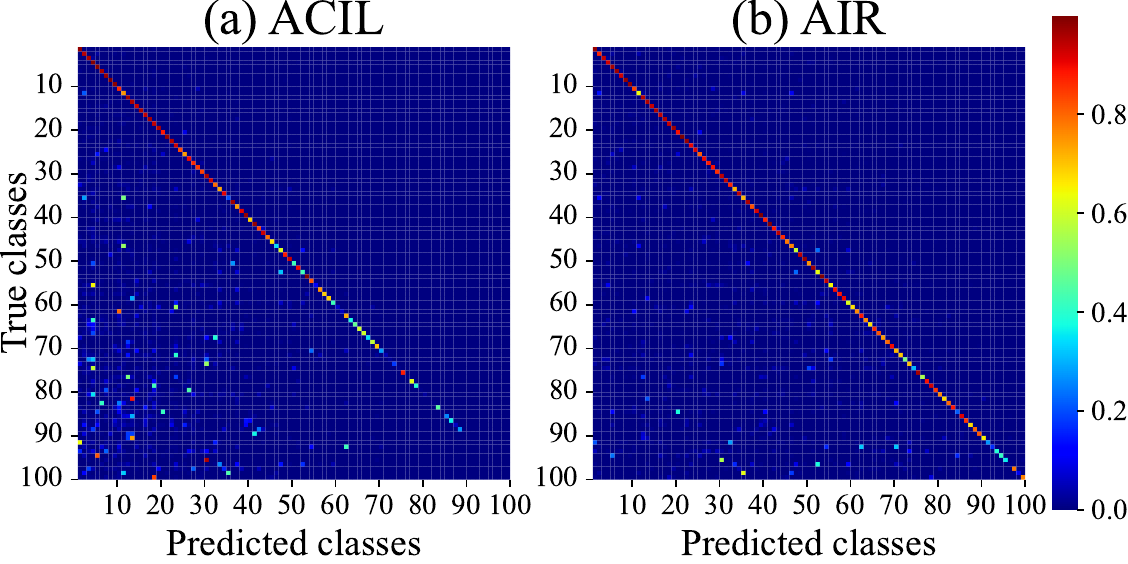}
            \caption{Last-phase performance on the testing set of CIFAR-100 under the descending LT-CIL scenario.}\label{fig:confusion_matrix}
        \end{figure}

    \subsubsection{Accuracy}
        \par As shown in \Cref{fig:accuracy}, AIR has a more balanced accuracy for each class. Although the accuracy of the head classes is slightly lower than ACIL, the accuracy of the middle and the tail classes is significantly improved, resulting in a better overall performance.
        \begin{figure}[ht]
            \centering
            \includegraphics[width=\linewidth]{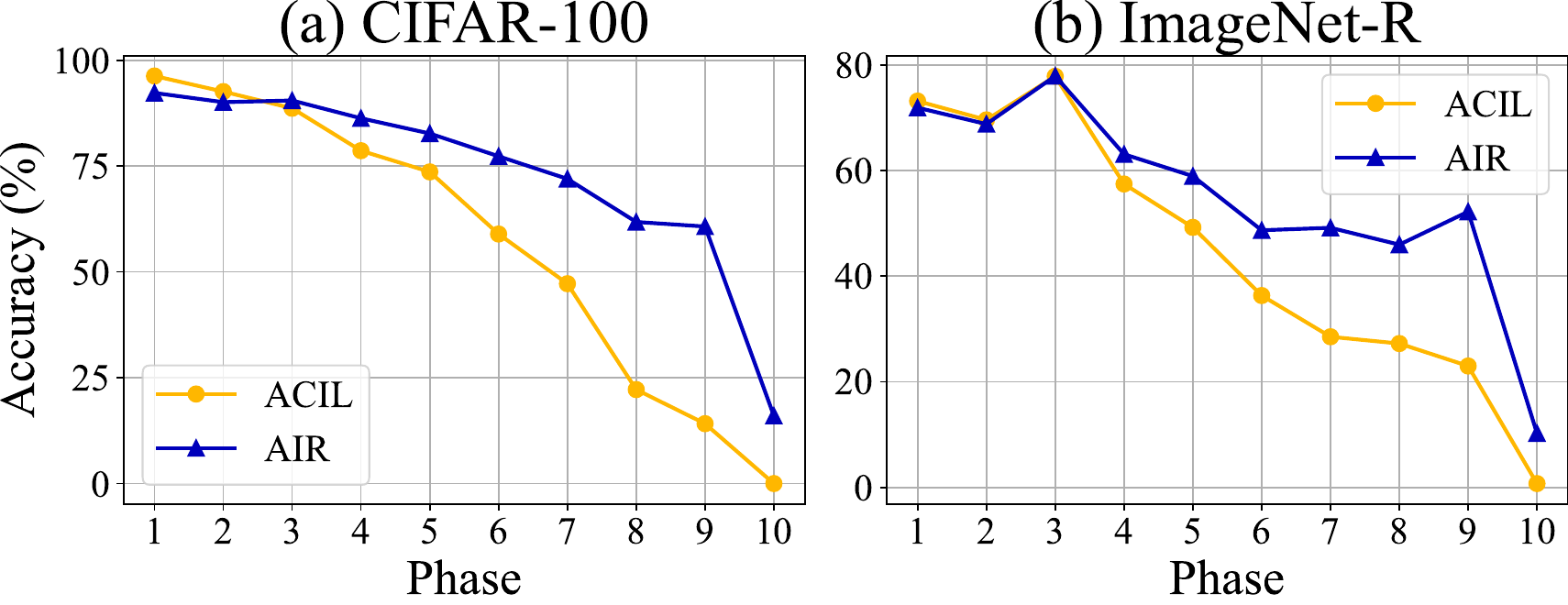}
            \caption{Last-phase accuracy for classes in each phase.}\label{fig:accuracy}
        \end{figure}

    \subsubsection{Weight}
        \par We plot the L2 norm of the weight for each class in the last-phase classifier on CIFAR-100. \Cref{fig:weight_comparison} (a) shows that the weight of the head classes is significantly larger than the tail classes in ACIL. That is why ACIL is more likely to predict the head classes. In contrast, AIR has a more balanced weight for each class shown in \Cref{fig:weight_comparison} (b), showing that AIR learns a more balanced classifier.
        \begin{figure}[ht]
            \centering
            \includegraphics[width=\linewidth]{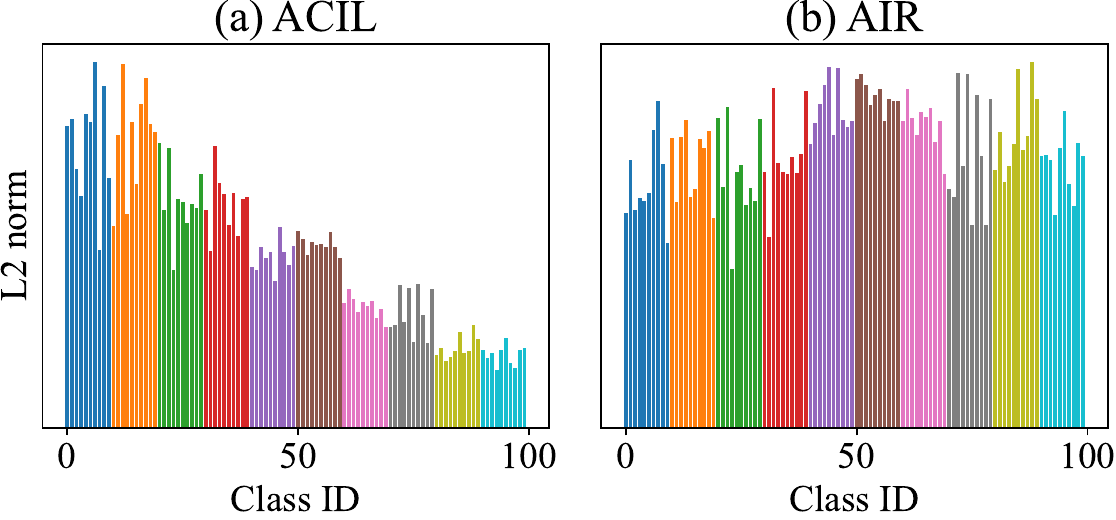}
            \caption{L2 norm of the weight for each class in the last-phase classifier under the descending LT-CIL scenario.}\label{fig:weight_comparison}
        \end{figure}

\subsection{Analysis on the Loss}
        \par We validate our claim that the unequal weight of each class in the loss function is the reason for discrimination and performance degradation under data-imbalanced scenarios by experiments with the same setting as the LT-CIL experiment.
        \par We train models under the descending order (where head classes are with smaller class IDs) and plot the average loss of samples in each class below in \Cref{fig:CIFAR100-MSE}. We use the mean square error (MSE) loss on the testing set of CIFAR-100. The losses of head classes of ACIL and DS-AL are significantly lower than the tail classes, indicating that the head classes are more important than the tail classes in training, leading to discrimination. In contrast, the losses of each class in AIR are unbiased, addressing this issue.
        \begin{figure}[ht]
            \includegraphics[width=\linewidth]{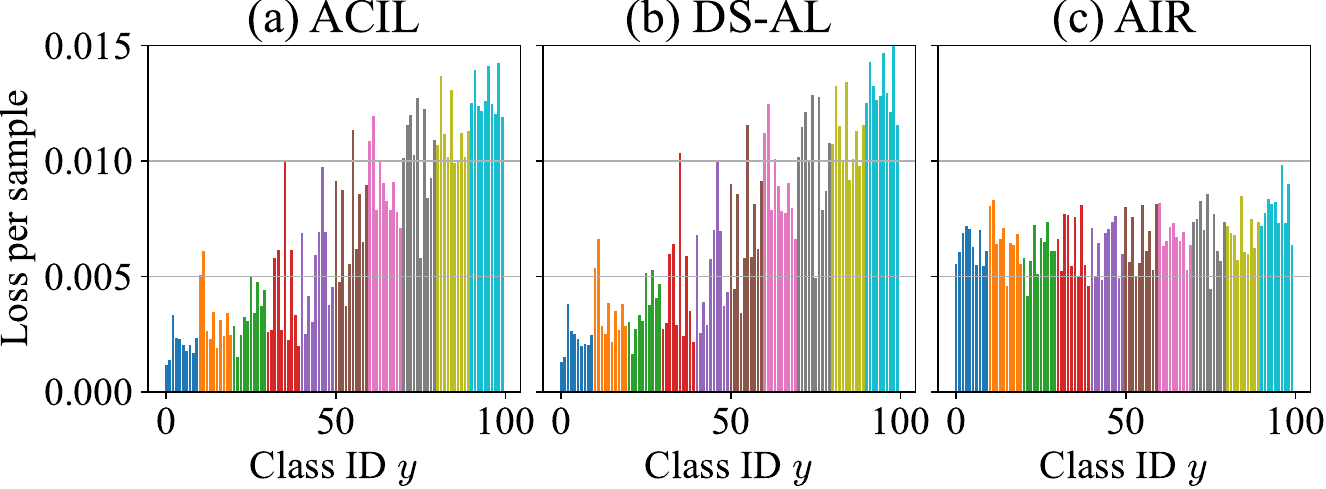}
            \caption{MSE loss on CIFAR-100 (LT) testing set.}\label{fig:CIFAR100-MSE}
        \end{figure}
        \par We also plot the sum of loss on the training set on the training dataset in \Cref{fig:CIFAR100-MSE_sum}. Classes with more training samples contribute more loss to the total loss. However, AIR can alleviate this issue by balancing the loss of each class. The sum loss of tail classes of AIR is much less than in other methods, leading to better performance.
        \begin{figure}[ht]
            \includegraphics[width=\linewidth]{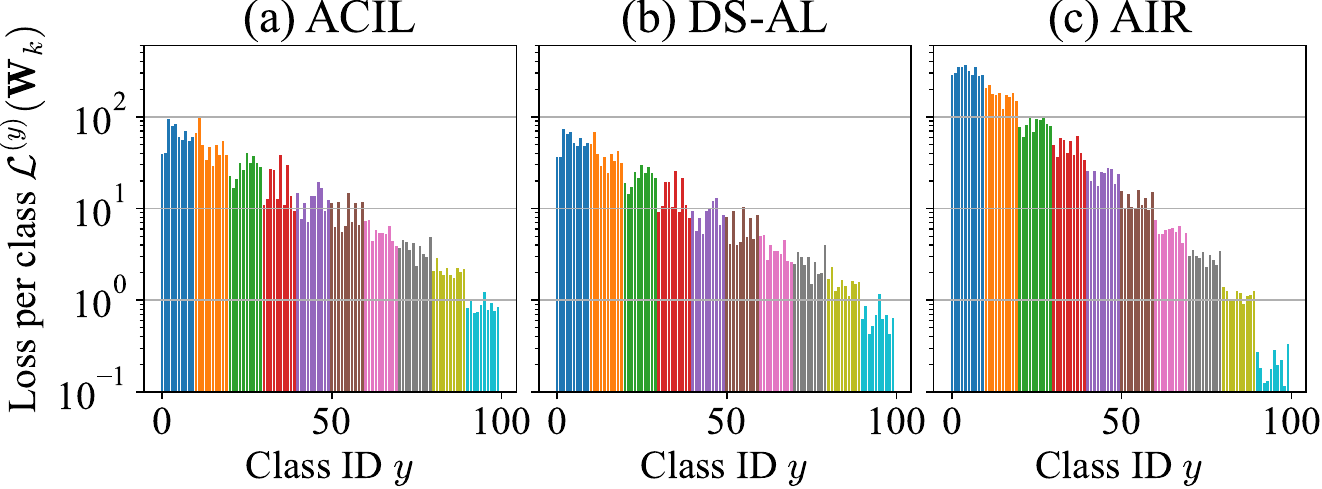}
            \caption{MSE loss on CIFAR-100 (LT) training set.}\label{fig:CIFAR100-MSE_sum}
        \end{figure}
\subsection{Discussion}
    \subsubsection{Why AIR outperforms Gradient-based Methods?}
        \par AIR significantly improves under the LT-CIL and the Si-blurry scenario compared with gradient-based methods. AIR, as a new member of ACL, inherits the non-forgetting property of ACL by giving an iterative closed-form solution, which avoids task-recency bias caused by gradient descent.

    \subsubsection{Why AIR outperforms Existing ACL Methods?}
        \par Existing ACL methods are not designed for data-imbalanced scenarios. AIR introduces ARM to balance the loss of each class, treating each class equally in the total loss function, thus performing better and without discrimination.

\section{Conclusions}
    \par In this paper, we point out that the unequal weight of each class in the loss function is the reason for discrimination and performance degradation under data-imbalanced scenarios. We propose AIR, a novel online exemplar-free CL method with an analytic solution for LT-CIL and GCIL scenarios to address this issue.
    \par AIR introduces ARM, which calculates a weighting factor for each class for the loss function to balance the contribution of each category to the overall loss and solve the problem of imbalanced training data and mixed new and old classes without storing exemplars simultaneously.
    \par Evaluations on the CIFAR-100, ImageNet-R, and Tiny-ImageNet datasets under the LT-CIL and the Si-blurry scenarios show that our AIR outperforms SOTA methods in most metrics, indicating that AIR is effective in real-world data-imbalanced CIL scenarios.


\bibliography{references}

\end{document}